\newtheorem{theorem}{Theorem}
\newtheorem{proposition}[theorem]{Proposition}
\begin{document}

% In the original styles from ACM, you would have needed to
% add meta-info here. This is not necessary for AAMAS 2012  as
% the complete copyright information is generated by the cls-files.

\title{Learning Performance of Prediction Markets with Kelly Bettors}

% AUTHORS

% For initial submission, do not give author names, but the
% tracking number, instead, as the review process is blind.

% You need the command \numberofauthors to handle the 'placement
% and alignment' of the authors beneath the title.
%
% For aesthetic reasons, we recommend 'three authors at a time'
% i.e. three 'name/affiliation blocks' be placed beneath the title.
%
% NOTE: You are NOT restricted in how many 'rows' of
% "name/affiliations" may appear. We just ask that you restrict
% the number of 'columns' to three.
%
% Because of the available 'opening page real-estate'
% we ask you to refrain from putting more than six authors
% (two rows with three columns) beneath the article title.
% More than six makes the first-page appear very cluttered indeed.
%
% Use the \alignauthor commands to handle the names
% and affiliations for an 'aesthetic maximum' of six authors.
% Add names, affiliations, addresses for
% the seventh etc. author(s) as the argument for the
% \additionalauthors command.
% These 'additional authors' will be output/set for you
% without further effort on your part as the last section in
% the body of your article BEFORE References or any Appendices.

%\numberofauthors{8} %  in this sample file, there are a *total*
% of EIGHT authors. SIX appear on the 'first-page' (for formatting
% reasons) and the remaining two appear in the \additionalauthors section.
%

\numberofauthors{3}

\author{
% You can go ahead and credit any number of authors here,
% e.g. one 'row of three' or two rows (consisting of one row of three
% and a second row of one, two or three).
%
% The command \alignauthor (no curly braces needed) should
% precede each author name, affiliation/snail-mail address and
% e-mail address. Additionally, tag each line of
% affiliation/address with \affaddr, and tag the
% e-mail address with \email.
% 1st. author
\alignauthor
Alina Beygelzimer\\ 
\affaddr{IBM Research}\\
\email{beygel@us.ibm.com}
\alignauthor
John Langford\\
\affaddr{Yahoo! Research}\\
\email{jl@yahoo-inc.com}
\alignauthor
David Pennock\\
\affaddr{Yahoo! Research}\\
\email{pennockd@yahoo-inc.com}
%Ben Trovato\titlenote{Dr.~Trovato insisted his name be first.}\\
%       \affaddr{Institute for Clarity in Documentation}\\
%       \affaddr{1932 Wallamaloo Lane}\\
%       \affaddr{Wallamaloo, New Zealand}\\
%       \email{trovato@corporation.com}
% 2nd. author
%\alignauthor
%G.K.M. Tobin\titlenote{The secretary disavows any knowledge of this author's actions.}\\
%       \affaddr{Institute for Clarity in Documentation}\\
%       \affaddr{P.O. Box 1212}\\
%       \affaddr{Dublin, Ohio 43017-6221}\\
%       \email{webmaster@marysville-ohio.com}
% 3rd. author
%\alignauthor Lars Th{\o}rv{\"a}ld\titlenote{This author is the one who did all the really hard work.}\\
%       \affaddr{The Th{\o}rv{\"a}ld Group}\\
%       \affaddr{1 Th{\o}rv{\"a}ld Circle}\\
%       \affaddr{Hekla, Iceland}\\
%       \email{larst@affiliation.org}
}

\maketitle

\begin{abstract}
In evaluating prediction markets (and other crowd-prediction mechanisms), investigators have repeatedly observed a so-called \emph{wisdom of crowds} effect, which can be roughly summarized as follows: the average of participants performs much better than the average participant. The market price---an average or at least aggregate of traders' beliefs---offers a better estimate than most any individual trader's opinion. In this paper, we ask a stronger question: how does the market price compare to the \emph{best} trader's belief, not just the average trader. We measure the market's worst-case log regret, a notion common in machine learning theory. To arrive at a meaningful answer, we need to assume something about how traders behave. We suppose that every trader optimizes according to the \emph{Kelly criteria}, a strategy that provably maximizes the compound growth of wealth over an (infinite) sequence of market interactions. We show several consequences. First, the market prediction is a wealth-weighted average of
the individual participants' beliefs. Second, the market
\emph{learns} at the optimal rate, the market price reacts exactly as if updating according to Bayes' Law, and the market prediction has low worst-case
log regret to the best individual participant. We simulate a sequence of markets where an underlying true
probability exists, showing that the market converges to the true objective
frequency as if updating a Beta distribution, as the theory predicts.
If agents adopt a fractional Kelly criteria, a common practical variant, we show that agents behave like full-Kelly agents with beliefs weighted between their own and the market's, and that the market price converges to a time-discounted frequency.
Our analysis provides a new justification for fractional Kelly betting,
a strategy widely used in practice for ad-hoc reasons. Finally, we propose a
method for an agent to learn her own optimal Kelly fraction.
\end{abstract}

% Note that the category section should be completed after reference to the ACM Computing Classification Scheme available at
% http://www.acm.org/about/class/1998/.

\category{I.2.11}{Artificial Intelligence}{Distributed Artificial Intelligence}[Intelligent agents, Multiagent systems]

%A category including the fourth, optional field follows...
%\category{D.2.8}{Software Engineering}{Metrics}[complexity measures, performance measures]

%General terms should be selected from the following 16 terms: Algorithms, Management, Measurement, Documentation, Performance, Design, Economics, Reliability, Experimentation, Security, Human Factors, Standardization, Languages, Theory, Legal Aspects, Verification.

\terms{Economics}

%Keywords are your own choice of terms you would like the paper to be indexed by.

\keywords{Auction and mechanism design, electronic markets, economically motivated agents, multiagent learning}

\section{Introduction}
Consider a gamble on a binary event, say, that Obama will win the 2012
US Presidential election, where every $x$ dollars risked earns $x b$ dollars in
net profit if the gamble pays off. How many dollars $x$ of your wealth should you
risk if you believe the probability is $p$? The gamble is favorable if $b p - (1-p) > 0$, in which case betting your entire wealth $w$ will maximize your expected profit. However, that's extraordinarily risky: a single stroke of bad luck loses everything. Over the course of many such gambles, the probability of bankruptcy approaches 1. On the other hand, betting a small fixed amount avoids bankruptcy but cannot take advantage of compounding growth.  %This is nontrivial---if you risk every dollar, then a mistake
%implies losing everything.  And if you risk nothing, then no gain can
%be made.

The \emph{Kelly criteria} prescribes choosing $x$ to maximize
the expected compounding growth rate of wealth, or
equivalently to maximize the expected logarithm of wealth. Kelly betting is asymptotically optimal, meaning that in the limit over many gambles, a Kelly bettor will grow wealthier than an otherwise identical non-Kelly bettor with probability 1 \cite{Breiman1961,Cover2006,Kelly1956,Thorp1969,Thorp1997}.

Assume all agents in a market optimize according to the Kelly
principle, where $b$ is selected to clear the market. We consider the
implications for the market as a whole and properties of the market
odds $b$ or, equivalently, the market probability $p_m = 1/(1+b)$. We
show that the market prediction $p_m$ is a wealth-weighted average of
the agents' predictions $p_i$. Over time, the market itself---by
reallocating wealth among participants---adapts at the optimal rate
with bounded log regret to the best individual agent.  When a true
objective probability exists, the market converges to it as if
properly updating a Beta distribution according to Bayes' rule.  These
results illustrate that there is no ``price of anarchy'' associated
with well-run prediction markets.

We also consider fractional Kelly betting, a lower-risk variant of Kelly betting that is popular in practice but has less theoretical grounding. We provide a new justification for fractional Kelly based on agent's confidence. In this case, the market prediction is a confidence-and-wealth-weighted average that empirically converges to a time-discounted version of objective frequency.
Finally, we propose a method for agents to learn their optimal fraction over time.

\section{Kelly betting}\label{sec:frac-kelly}

When offered $b$-to-1 odds on an event with probability $p$, the Kelly-optimal amount to bet is $f^{*} w$, where
\[
f^{*}=\frac{bp-(1-p)}{b}\]
is the optimal fixed fraction of total wealth $w$ to commit to the gamble.

If $f^{*}$ is negative, Kelly says to avoid betting: expected profit is negative. If $f^{*}$ is positive, you have an information edge; Kelly
says to invest a fraction of your wealth proportional to how
advantageous the bet is. In addition to maximizing the growth rate of
wealth, Kelly betting maximizes the geometric mean of wealth and asymptotically minimizes the
mean time to reach a given aspiration level of wealth \cite{Thorp1997}.

Suppose fair odds of $1/b$ are simultaneously offered on the opposite outcome (e.g., Obama will \emph{not} win the election). If $bp-(1-p)<0$, then betting on this opposite outcome is favorable; substituting $1/b$ for $b$ and $1-p$ for $p$, the optimal fraction of wealth to bet becomes $1-p-bp$.

An equivalent way to think of a gamble with odds $b$ is as a prediction market with price $p_m=1/(1+b)$. The volume of bet is specified by choosing a quantity $q$ of \emph{shares}, where each share is worth \$1 if the outcome occurs and nothing otherwise. The price represents the cost of one share: the amount needed to pay for a chance to win back \$1. In this interpretation, the Kelly formula becomes
\[
f^{*}=\frac{p-p_m}{1-p_m}.\]
The optimal action for the agent is to trade $q^{*}=f^{*} w / p_m$ shares, where $q^{*}>0$ is a buy order and $q^{*}<0$ is a sell order, or a bet against the outcome.

Note that $q^{*}$ is the optimum of expected log utility 
\[
p\ln((1-p_m)q+w)+(1-p)\ln(-p_m q+w).\] This is not a coincidence: Kelly betting is identical to maximizing expected log utility.

\section{Market model}

Suppose that we have a prediction market,
where participant $i$ has a starting wealth $w_i$ with $\sum_i w_i = 1$.
Each participant $i$ uses Kelly betting to determine the
fraction $f^*_i$ of their wealth bet,
depending on their predicted probability $p_i$.

We model the market as an auctioneer matching supply and demand, taking no profit and absorbing no loss. We adopt a competitive equilibrium concept, meaning that agents are "price takers", or do not consider their own effect on prices if any. Agents optimize according to the current price and do not reason further about what the price might reveal about the other agents' information. An exception of sorts is the fractional Kelly setting, where agents do consider the market price as information and weigh it along with their own.

A market is in competitive equilibrium at price $p_m$ if all agents
are optimizing and $\sum_i q_i^{*}=0$, or every buy order and sell
order are matched.  We discuss next what the value of $p_m$ is.

\section{Market prediction}

In order to define the prediction market's performance, we must define
its prediction $b$, or the equilibrium payoff odds reached when all
agents are optimizing, and supply and demand are precisely balanced.
Recall that the market's probability implied by the odds of $b$ is
$p_m = 1/(1+b)$.  We will show that $p_m$ is $\sum_{i}w_{i}p_{i}$.

\subsection{Payout balance}
The first approach we'll use is payout balance: the amount of money at
risk must be the same as the amount paid out.

\begin{theorem}\label{thm:pricing}(Market Pricing)  For all normalized agent wealths $w_i$ and agent beliefs $p_i$,
$$ p_m = \sum_i p_i w_i $$
\end{theorem}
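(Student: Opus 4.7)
The plan is to combine the Kelly-optimal trade formula already derived in Section~\ref{sec:frac-kelly} with the market-clearing condition. From that section, each Kelly agent $i$ optimally trades $q_i^* = f_i^* w_i / p_m$ shares with $f_i^* = (p_i - p_m)/(1 - p_m)$, so that
$$q_i^* = \frac{w_i (p_i - p_m)}{p_m (1-p_m)}.$$
This single expression covers both the buy regime ($p_i > p_m$, so $q_i^* > 0$) and the sell/bet-against regime ($p_i < p_m$, so $q_i^* < 0$), which is convenient because it lets us avoid splitting the sum over agents into two cases.

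Next I would argue that this is genuinely the optimum of the expected log-utility $p_i \ln((1-p_m)q + w_i) + (1-p_i)\ln(-p_m q + w_i)$ stated at the end of Section~\ref{sec:frac-kelly}: differentiating with respect to $q$, setting equal to zero, and clearing denominators gives $w_i(p_i - p_m) = p_m(1-p_m)\, q_i^*$, which recovers the same $q_i^*$. This justifies using one formula uniformly across agents regardless of the direction of their trade.

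With the per-agent formula in hand, I would impose the equilibrium condition $\sum_i q_i^* = 0$, which is the paper's ``payout balance'' requirement that every share bought is matched by a share sold (so the auctioneer's book clears and all promised payouts are covered by incoming premiums). Substituting,
$$\frac{1}{p_m(1-p_m)} \sum_i w_i (p_i - p_m) = 0.$$
For any interior equilibrium price $0 < p_m < 1$ the prefactor is nonzero, so $\sum_i w_i p_i = p_m \sum_i w_i$. Using the normalization $\sum_i w_i = 1$ yields $p_m = \sum_i w_i p_i$.

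The main obstacle is not any computation but making sure the unified formula for $q_i^*$ is the correct characterization of the Kelly-optimal order when an agent may either buy or sell at price $p_m$; once that is in place, the theorem follows in a single line from market clearing. A minor corner to verify is that $p_m \in (0,1)$ at equilibrium (so the prefactor is nonzero), which is immediate whenever at least one agent has a belief strictly between $0$ and $1$, since otherwise some agent would want to trade an unbounded quantity and the market could not clear.
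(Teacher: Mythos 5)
Your proof is correct, but it is not the route the paper takes for Theorem~\ref{thm:pricing} itself. The paper's proof is a ``payout balance'' argument phrased in terms of wealth fractions at stake: it splits the agents into those with $p_i>p_m$ (who back the event with fraction $(p_i-p_m)/(1-p_m)$ of wealth) and those with $p_i<p_m$ (who back the complement with fraction $(p_m-p_i)/p_m$), computes the total payout $(1+b)\sum_{i:p_i>p_m}\frac{p_i-p_m}{1-p_m}w_i$ owed if the event occurs, and equates it to the total money staked by both sides; the case split then cancels to give $\sum_i p_i w_i = p_m \sum_i w_i$. You instead work in share space with the single signed formula $q_i^* = w_i(p_i-p_m)/(p_m(1-p_m))$, verify it as the stationary point of the expected log utility, and impose $\sum_i q_i^*=0$ --- which is exactly the paper's \emph{second} derivation, given as Proposition~\ref{thm:market-linop} in the ``Log utility maximization'' subsection (the paper even states that result's proof as ``satisfy $\sum_i q_i = 0$ \ldots by substitution''). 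The two conditions (payout balance and share clearing) are equivalent, so nothing is lost; what your version buys is the elimination of the case analysis, since the signed quantity $q_i^*$ handles buyers and sellers uniformly, at the cost of having to justify (as you do) that the unconstrained log-utility optimum is the correct description of the Kelly order on both sides of $p_m$. Your closing remark that $p_m\in(0,1)$ is needed for the prefactor to be nonzero is a legitimate corner that the paper glosses over in both of its derivations.
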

\begin{proof}
To see this, recall that $f_{i}^{*}=(p_i-p_m)/(1-p_m)$ for
$p_{i}>p_m$. For $p_{i}<p_m$, Kelly betting prescribes taking the
other side of the bet, with fraction
\[
\frac{(1-p_{i})-(1-p_m)}{1-(1-p_m)}=\frac{p_m-p_{i}}{p_m}.\] So the
market equilibrium occurs at the point $p_m$ where the payout is equal
to the payin.  If the event occurs, the payin is
\[
(1 + b)\sum_{i:p_{i}>p_m}\frac{p_i-p_m}{1-p_m}w_{i} = 
\frac{1}{p_m}\sum_{i:p_{i}>p_m}\frac{p_i-p_m}{1-p_m}w_{i}.
\]
Thus we want
\begin{align*}
\frac{1}{p_m}\sum_{i:p_{i}>p_m}\frac{p_i-p_{m}}{1-p_m}w_{i} & =
\sum_{i:p_{i}>p_m}\frac{p_{i}-p_m}{1-p_m}w_{i} \ + \\
& \qquad \: \sum_{i:p_{i}<p_m}\frac{p_m-p_{i}}{p_m}w_{i}, \quad \text{or}\\
\frac{1-p_m}{p_m}\sum_{i:p_{i}>p_m}\frac{p_i-p_{m}}{1-p_m}w_{i} & =\sum_{i:p_{i}< p_m}\frac{p_{m}-p_i}{p_m}w_{i}, \quad \text{or}\\
\sum_{i:p_{i}>p_m}(p_i-p_{m})w_{i} & =\sum_{i:p_{i}<p_m}(p_{m}-p_i)w_{i},
\quad \text{or} \\
\sum_{i}p_i w_{i} & = \sum_{i}p_{m}w_{i}.
\end{align*}
\noindent
Using $\sum_{i}w_{i}=1$, we get the theorem.
\end{proof}

\subsection{Log utility maximization}

An alternate derivation of the market prediction utilizes the fact that Kelly betting is equivalent to maximizing expected log utility. Let $q=x(b+1)$ be the gross profit of an agent who risks $x$ dollars, or in prediction market language the number of shares purchased. Then expected log utility is
\begin{displaymath}
E[U(q)] = p\ln((1-p_m)q+w)
+(1-p)\ln(-p_m q+w).
\end{displaymath}
The optimal $q$ that maximizes $E[U(q)]$ is
\begin{equation}
q(p_m) = \frac{w}{p_m} \cdot \frac{p-p_m}{1-p_m}.
\label{eq:linop-1-dem}
\end{equation}

\begin{proposition}
In a market of agents each with log utility and initial wealth $w$, the competitive equilibrium price is
\begin{equation}
p_m = \sum_i w_i p_i
\label{eq:linop-price}
\end{equation}
where we assume $\sum_{i}w_{i}=1$, or $w$ is normalized wealth not absolute wealth.
\label{thm:market-linop}
\end{proposition}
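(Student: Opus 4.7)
The plan is to prove the proposition by imposing the market-clearing condition on the individual optimal demands derived from log-utility maximization, and observing that all the case splits used in the proof of Theorem~\ref{thm:pricing} collapse into a single linear equation.

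First, I would invoke equation~(\ref{eq:linop-1-dem}), which already characterizes each agent's optimal share purchase $q_i(p_m) = \frac{w_i}{p_m}\cdot\frac{p_i-p_m}{1-p_m}$ as the unique maximizer of their expected log utility. A key observation worth stating explicitly is that this single formula automatically encodes both buying (when $p_i > p_m$, so $q_i > 0$) and short-selling/betting-against (when $p_i < p_m$, so $q_i < 0$), so unlike the payout-balance derivation there is no need to partition the agents by which side of the market they take.

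Next, I would apply the competitive-equilibrium condition $\sum_i q_i(p_m) = 0$. Substituting the demand formula gives
\[
\frac{1}{p_m(1-p_m)}\sum_i w_i(p_i - p_m) = 0.
\]
Since $p_m \in (0,1)$, the scalar prefactor is nonzero and can be cancelled, leaving $\sum_i w_i p_i = p_m \sum_i w_i$. Using the normalization $\sum_i w_i = 1$ yields $p_m = \sum_i w_i p_i$, as claimed.

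There is no real obstacle here; the argument is essentially a one-line consequence of the closed-form demand. The only subtlety to flag is the implicit assumption that an interior equilibrium $p_m \in (0,1)$ exists, which is guaranteed whenever the agents' beliefs $p_i$ are not all $0$ or all $1$, since $\sum_i w_i p_i$ is then a convex combination lying strictly inside $(0,1)$ and by construction clears the market. This gives a clean second derivation of Theorem~\ref{thm:pricing} that emphasizes the utility-theoretic foundation of the pricing rule rather than the accounting identity between payin and payout.
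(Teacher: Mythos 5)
Your proof is correct and takes essentially the same route as the paper: the paper's own (one-line) proof also verifies the price via the clearing condition $\sum_i q_i = 0$ applied to the demand formula~(\ref{eq:linop-1-dem}), merely stating ``by substitution'' where you carry out the algebra. Your version just fills in that algebra and adds the (valid) remark about interiority of the equilibrium.
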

\textbf{Proof.} These prices satisfy $\sum_i q_i = 0$, the condition for competitive equilibrium (supply equals demand), by substitution. $\Box$ \smallskip

This result can be seen as a simplified derivation of that by Rubinstein
\cite{Rubinstein74,Rubinstein75,Rubinstein76} and is also discussed by Pennock and Wellman \cite{Pennock01-mfpo-tr,Pennock99-thesis} and Wolfers and Zitzewitz \cite{Wolfers2006}.

\section{Learning Prediction Markets}

Individual participants may have varying prediction qualities and
individual markets may have varying odds of payoff.  What happens to
the wealth distribution and hence the quality of the market prediction
over time?  We show next that the market \emph{learns} optimally for
two well understood senses of optimal.

\subsection{Wealth redistributed according to Bayes' Law}

In an individual round, if an agent's belief is $p_{i}>p_m$, then they
bet $\frac{p_{i}-p_m}{1-p_m}w_{i}$ and have a total wealth afterward
dependent on $y$ according to:

\begin{align*}
\text{If}\quad y=1, & \quad \left(\frac{1}{p_m}-1\right)\frac{p_{i}-p_m}{1-p_m}w_{i}+w_{i}=\frac{p_{i}}{p_m}w_{i} \\ 
\text{If}\quad y=0, & \quad (-1)\frac{p_{i}-p_m}{1-p_m}w_{i}+w_{i}=\frac{1-p_{i}}{1-p_m}w_{i}
\\
\end{align*}

Similarly if $p_{i}<p_m$, we get: 
\begin{align*}
\text{If}\quad y=1, &\quad (-1)\frac{p_m-p_{i}}{p_m}w_{i}+w_{i}=\frac{p_{i}}{p_m}w_{i} \\
\text{If}\quad y=0, &\quad \left(\frac{1}{1-p_m}-1\right)\frac{p_m-p_{i}}{p_m}w_{i}+w_{i}=\frac{1-p_{i}}{1-p_m}w_{i}, \end{align*}
which is identical. 

%Bayes Law states that $P(A|B) = \frac{P(B|A)P(A)}{P(B)}$. 
If we treat the prior probability that agent $i$ is correct as $w_i$, 
Bayes' law states that the posterior probability of choosing 
agent $i$ is
$$
P(i\mid y=1) = \frac{P(y=1\mid i)P(i)}{P(y=1)} = \frac{p_i w_i}{p_m} = \frac{p_i w_i}{\sum_i p_i w_i},
$$ which is precisely the wealth computed above for the $y=1$ outcome.
The same holds when $y=0$, and so Kelly bettors redistribute
wealth according to Bayes' law.

\subsection{Market Sequences}
\label{sec:sequence}

It is well known that Bayes' law is the correct approach for
integrating evidence into a belief distribution, which shows that
Kelly betting agents optimally summarize all past information if the
true behavior of the world was drawn from the prior distribution of
wealth.

Often these assumptions are too strong---the world does not behave
according to the prior on wealth, and it may act in a manner
completely different from any one single expert.  In that case, a
standard analysis from learning theory shows that the market has
low \emph{regret}, performing almost as well as the best market
participant.

For any particular sequence of markets we have a sequence $p_{t}$
of market predictions and $y_{t}\in\{0,1\}$ of market outcomes. We
measure the accuracy of a market according to log loss as \[
L\equiv\sum_{t=1}^{T}I(y_{t}=1)\log\frac{1}{p_{t}}+I(y_{t}=0)\log\frac{1}{1-p_{t}}.\]
Similarly, we measure the quality of market participant making prediction
$p_{it}$ as \[
L_{i}\equiv\sum_{t=1}^{T}I(y_{t}=1)\log\frac{1}{p_{it}}+I(y_{t}=0)\log\frac{1}{1-p_{it}}.\]
So after $T$ rounds, the total wealth of player
$i$ is \[
w_{i}\prod_{t=1}^{T}\left(\frac{p_{it}}{p_{t}}\right)^{y_{t}}\left(\frac{1-p_{it}}{1-p_{t}}\right)^{1-y_{t}},\]
where $w_i$ is the starting wealth.
We next prove a well-known theorem for learning in the present context (see for example~\cite{FSSW1997}).
\begin{theorem}For all sequences of participant predictions
$p_{it}$ and all sequences of revealed outcomes $y_{t}$, \[
L\leq\min_{i}L_{i}+\ln\frac{1}{w_{i}}.\]
\end{theorem}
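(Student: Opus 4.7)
The plan is to exploit the wealth-evolution formulas derived immediately above the statement, together with the conservation of total wealth in a zero-sum market, to turn the stated inequality into a one-line consequence.

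First, I would make explicit the claim that total wealth is conserved round-by-round. Using the market pricing theorem ($p_t = \sum_i w_{it} p_{it}$) and the wealth updates $w_{i,t+1} = w_{it}\, p_{it}/p_t$ when $y_t=1$ and $w_{i,t+1} = w_{it}\,(1-p_{it})/(1-p_t)$ when $y_t=0$, a direct summation gives $\sum_i w_{i,t+1} = 1$ in either case, because the numerator telescopes to $p_t$ or $1-p_t$ respectively. Since we start with $\sum_i w_i = 1$, we get $\sum_i w_{iT} = 1$ for all $T$, and in particular every individual participant's final wealth satisfies
\[
w_i \prod_{t=1}^{T}\left(\frac{p_{it}}{p_t}\right)^{y_t}\left(\frac{1-p_{it}}{1-p_t}\right)^{1-y_t} \;\leq\; 1.
\]

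Second, I would take the logarithm of this inequality for a fixed $i$, obtaining
\[
\ln w_i + \sum_{t=1}^{T}\!\left[y_t \ln\frac{p_{it}}{p_t} + (1-y_t)\ln\frac{1-p_{it}}{1-p_t}\right] \leq 0.
\]
Splitting the logs and regrouping the market terms on one side and the participant terms on the other exactly yields $L - L_i \leq \ln(1/w_i)$, i.e., $L \leq L_i + \ln(1/w_i)$. Since this holds for every $i$, taking the minimum over $i$ delivers the theorem.

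The only substantive step is the wealth-conservation identity; everything else is algebraic bookkeeping. I do not expect a real obstacle, since the wealth updates and the pricing identity $p_t = \sum_i w_{it}p_{it}$ are already in hand from the preceding sections, and the connection between ``market wealth is conserved'' and ``market log-loss is bounded by the best expert's log-loss plus $\ln(1/w_i)$'' is the standard potential-function argument familiar from the aggregating-algorithm / weighted-majority literature (and cited as \cite{FSSW1997} in the theorem statement).
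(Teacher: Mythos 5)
Your proposal is correct and follows essentially the same argument as the paper: bound each participant's final wealth by the conserved total of $1$, take logarithms, and rearrange to get $L \leq L_i + \ln(1/w_i)$. You actually spell out the wealth-conservation step (via the pricing identity $p_t = \sum_i w_{it} p_{it}$) more explicitly than the paper, which simply asserts it.
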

This theorem is extraordinarily general, as it applies to \emph{all}
market participants and \emph{all} outcome sequences, even when these
are chosen adversarially.  It states that even in this worst-case
situation, the market performs only $\ln 1/w_i$ worse than the best
market participant $i$.  
\begin{proof}
Initially, we have that $\sum_{i}w_{i}=1$. After $T$ rounds, the total wealth
of any participant $i$ is given by 
\[
w_{i}\prod_{t=1}^{T}\left(\frac{p_{it}}{p_{t}}\right)^{y_{t}}\left(\frac{1-p_{it}}{1-p_{t}}\right)^{1-y_{t}}=w_{i}e^{L-L_{i}}\leq1,\]
where the last inequality follows from wealth being conserved.
Thus
$
\ln w_{i}+L-L_{i}\leq0$, yielding
\[
L\leq L_{i}+\ln\frac{1}{w_{i}}.\]
\end{proof}

\section{Fractional Kelly Betting}
\emph{Fractional Kelly betting} says to invest a smaller fraction $\lambda f^{*}$ of wealth for $\lambda < 1$. Fractional Kelly is usually justified on an ad-hoc basis as either (1) a risk-reduction strategy, since practitioners often view full Kelly as too volatile, or (2) a way to protect against an inaccurate belief $p$, or both  \cite{Thorp1997}.
%Seems untrue --John (For example, with full Kelly, at any given time there is a 1 in $k$ chance of losing all but $1/k$ of your wealth.) 
Here we derive an alternate interpretation of fractional Kelly. In prediction market terms, the fractional Kelly formula is
\[
\lambda\frac{p-p_m}{1-p_m}.\]
With some algebra, fractional Kelly can be rewritten as
\[
\frac{p'-p_m}{1-p_m}\]
where 
\begin{equation}
p' = \lambda p + (1-\lambda) p_m .
\label{eq:update-glu}
\end{equation}
In other words, $\lambda$-fractional Kelly is precisely equivalent to full Kelly with revised belief $\lambda p + (1-\lambda)p_m$, or a weighted average of the agent's original belief and the market's belief. In this light, fractional Kelly is a form of confidence weighting where the agent mixes between remaining steadfast with its own belief ($\lambda=1$) and acceding to the crowd and taking the market price as the true probability ($\lambda=0$). The weighted average form has a Bayesian justification if the agent has a Beta prior over $p$ and has seen $t$ independent Bernoulli trials to arrive at its current belief. If the agent envisions that the market has seen $t'$ trials, then she will update her belief to $\lambda p + (1-\lambda)p_m$, where $\lambda = t/(t+t')$  \cite{Morris83,Pennock99-thesis,Rosenblueth92}. The agent's posterior probability
given the price is a weighted average of its prior and the price, where the weighting term captures her perception of her own confidence, expressed in terms of the independent observation count seen as compared to the market.

\section{Market prediction with fractional Kelly}

When agents play fractional Kelly, the competitive equilibrium price naturally changes.  The resulting market price is easily compute, as for fully Kelly agents.

\begin{theorem}(Fractional Kelly Market Pricing)  For all agent beliefs $p_i$, normalized wealths $w_i$ and fractions $\lambda_i$ 
\begin{equation}
p_m = \frac{\sum_i \lambda_i w_i p_i}{\sum_l \lambda_l w_l}  .
\label{eq:linop-learn-price}
\end{equation}
\end{theorem}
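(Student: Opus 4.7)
The plan is to reduce the fractional-Kelly case to the already-proved full-Kelly case using the rewriting in equation~(\ref{eq:update-glu}): a fractional-Kelly agent with belief $p_i$ and fraction $\lambda_i$ submits exactly the same order as a full-Kelly agent holding the revised belief $p'_i = \lambda_i p_i + (1-\lambda_i) p_m$. Since the market-clearing condition depends only on the demand curves that agents submit, the hypotheses of Theorem~\ref{thm:pricing} are satisfied by the fictitious full-Kelly population $\{(p'_i, w_i)\}$, and we immediately get the equilibrium identity $p_m = \sum_i w_i p'_i$.

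The second step is to solve this identity for $p_m$. Substituting the definition of $p'_i$ produces a fixed-point equation:
\[
p_m = \sum_i w_i \bigl[\lambda_i p_i + (1-\lambda_i) p_m\bigr] = \sum_i w_i \lambda_i p_i + p_m \sum_i w_i (1-\lambda_i).
\]
Using $\sum_i w_i = 1$, the coefficient of $p_m$ on the right equals $1 - \sum_i w_i \lambda_i$, so moving that term to the left gives $p_m \sum_i w_i \lambda_i = \sum_i w_i \lambda_i p_i$, which rearranges to the stated formula~(\ref{eq:linop-learn-price}).

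The only subtle point, and the one I would be careful about, is the circularity in the reduction: the revised belief $p'_i$ itself depends on the unknown equilibrium price $p_m$. This is resolved by noting that Theorem~\ref{thm:pricing} is a statement about equilibrium, not about a pre-declared vector of beliefs; at equilibrium $p_m$ is a fixed scalar common to all agents, so each agent's demand is literally the full-Kelly demand at some constant $p'_i$, and the payout-balance argument in the proof of Theorem~\ref{thm:pricing} goes through verbatim. Everything else is a one-line algebraic solve, so I would not expect a major obstacle beyond flagging this reduction cleanly.
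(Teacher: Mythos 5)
Your proof is correct, but it reduces to Theorem~\ref{thm:pricing} along a different axis than the paper does. The paper's proof uses the \emph{wealth} reinterpretation: a $\lambda$-fractional Kelly agent of wealth $w$ places exactly the bet of a full-Kelly agent of wealth $\lambda w$ with the \emph{same} belief, so Theorem~\ref{thm:pricing} applies with renormalized wealths $w_i' = \lambda_i w_i / \sum_l \lambda_l w_l$ and unchanged $p_i$, giving the formula in one step with no fixed-point equation to solve. You instead use the \emph{belief} reinterpretation from equation~(\ref{eq:update-glu}) --- same wealth, revised belief $p_i' = \lambda_i p_i + (1-\lambda_i)p_m$ --- which forces you to solve the resulting self-referential identity $p_m = \sum_i w_i p_i'$ for $p_m$; your algebra there is right, and you correctly flag and resolve the circularity (at equilibrium $p_m$ is a fixed scalar, so each $p_i'$ is a genuine constant belief and the payout-balance argument applies verbatim). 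The two reductions are equally legitimate since both rest on the identity $\lambda\frac{p-p_m}{1-p_m}w = \frac{p'-p_m}{1-p_m}w = \frac{p-p_m}{1-p_m}(\lambda w)$; the paper's buys a shorter proof with no circularity worry, while yours makes the ``confidence-weighted belief'' interpretation of fractional Kelly do the work, which connects more directly to the surrounding discussion. One edge case worth a sentence in your write-up: your final division requires $\sum_i \lambda_i w_i > 0$ (if every $\lambda_i = 0$ nobody trades and the equilibrium price is indeterminate), a degeneracy that is implicit in the theorem statement itself.
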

Prices retain the form of a weighted average, but with weights proportional to
the product of wealth and self-assessed confidence.

\begin{proof}
The proof is a straightforward corollary of Theorem~\ref{thm:pricing}.  In particular, we note that a $\lambda$-fractional Kelly agent of wealth $w$ bets precisely as a full-Kelly agent of wealth $\lambda w$.  Consequently, we can apply theorem~\ref{thm:pricing} with $w_i' = \frac{\lambda_i w_i}{\sum_i \lambda_i w_i}$ and $p_i' = p_i$ unchanged.
\end{proof}

\section{Market dynamics with stationary objective frequency}

The worst-case bounds above hold even if event outcomes are chosen by a malicious adversary. In this section, we examine how the market performs when the objective frequency of outcomes is unknown though stationary.

The market consists of a single bet repeated over the course of $T$
periods. Unbeknown to the agents, each event unfolds as an
independent Bernoulli trial with probability of success $\pi$. At the
beginning of time period $t$, the realization of event $E_t$ is
unknown and agents trade until equilibrium. Then the outcome is
revealed, and the agents' holdings pay off accordingly. As time period
$t+1$ begins, the outcome of $E_{t+1}$ is uncertain. Agents bet on the
$t+1$ period event until equilibrium, the outcome is revealed, payoffs
are collected, and the process repeats.

In an economy of Kelly bettors, the equilibrium price is a
wealth-weighted average (\ref{eq:linop-price}). Thus, as an agent
accrues relatively more earnings than the others, its influence on
price increases. In the next two subsections, we examine how this
adaptive process unfolds; first, with full-Kelly agents and second, with fractional Kelly agents. In the former case, prices react exactly as if the market were a single agent updating a Beta distribution according to Bayes' rule.

\subsection{Market dynamics with full-Kelly agents}

\begin{figure}
(a)\includegraphics[scale=0.9]{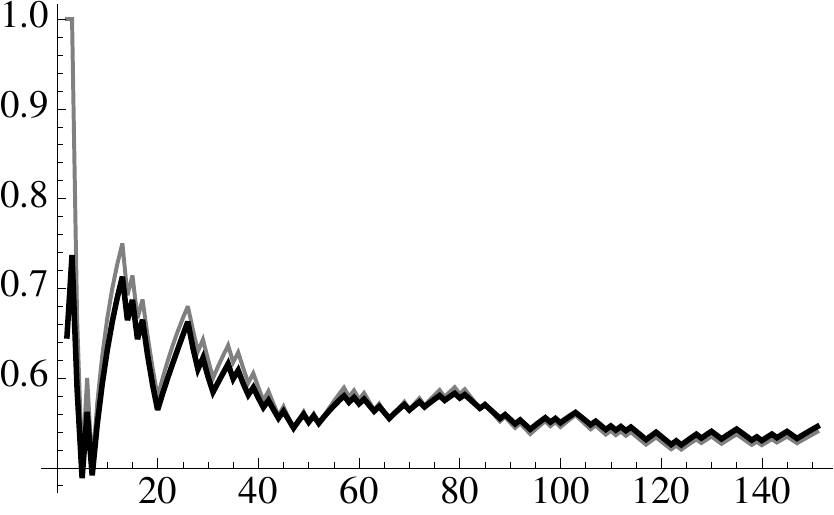} 
(b)\includegraphics[scale=0.9]{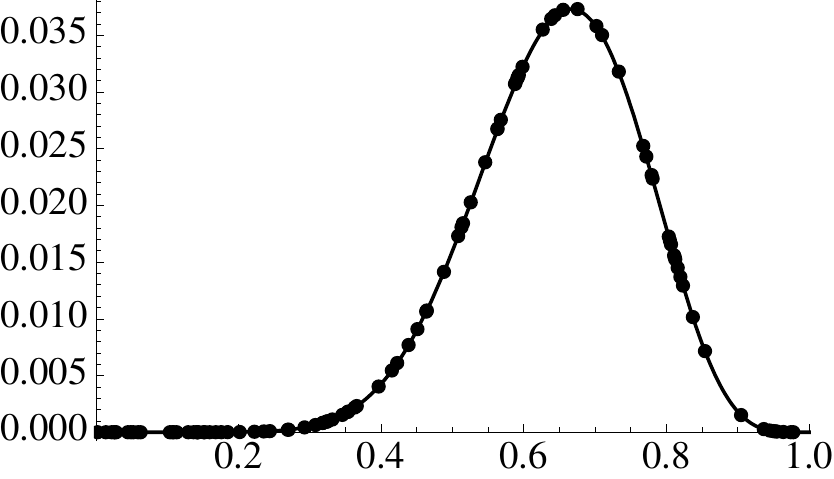}
\caption
%[Price and observed frequency over time for fixed-belief agents with GLU.]
{(a) Price (black line) versus the
observed frequency (gray line) of the event over 150 time periods.
The market consists of 100 full-Kelly agents with
initial wealth $w_i=1/100$. (b) Wealth after 15 time periods versus belief for 100 Kelly agents. The event has
occurred in 10 of the 15 trials. The solid line is the
posterior Beta distribution consistent with observing 10
successes in 15 independent Bernoulli trials.}
\label{fig:adapt-price-glu}
\end{figure}
Figure~\ref{fig:adapt-price-glu}.a plots the price over
150 time periods, in a market composed of 100 Kelly agents with initial wealth $w_i=1/100$, and $p_i$ generated
randomly and uniformly on $(0,1)$. In this simulation the true probability of success $\pi$ is $0.5$. For
comparison, the figure also shows the \emph{observed frequency}, or the
number of times that $E$ has occurred divided by the number
of periods. The market price tracks the observed frequency extremely
closely. Note that price changes are due entirely to a transfer of
wealth from inaccurate agents to accurate agents, who then wield more
power in the market; individual beliefs remain fixed.

Figure~\ref{fig:adapt-price-glu}.b illustrates the nature of this
wealth transfer. The graph provides a snapshot of agents' wealth
versus their belief $p_i$ after period 15. In this run, $E$ has
occurred in 10 out of the 15 trials.  The maximum in wealth is
near 10/15 or 2/3. The solid line in the figure is a Beta distribution with
parameters $10+1$ and $5+1$. This
distribution is precisely the posterior probability of success that
results from the observation of 10 successes out of 15 independent
Bernoulli trials, when the prior probability of success is uniform on
(0,1).  The fit is essentially perfect, and can be proved in the limit
since the Beta distribution is conjugate to the Binomial distribution under Bayes' Law.

Although individual agents are not adaptive, the market's composite
agent computes a proper Bayesian update. Specifically, wealth
is reallocated proportionally to a Beta distribution corresponding to
the observed number of successes and trials, and price is approximately
the expected value of this Beta distribution.\footnote{As $t$ grows,
this expected value rapidly approaches the observed frequency plotted
in Figure~\ref{fig:adapt-price-glu}.} Moreover, this correspondence
holds regardless of the number of successes or failures, or the
temporal order of their occurrence. A kind of collective Bayesianity
\emph{emerges} from the interactions of the group.

We also find empirically that, even if not all agents are Kelly bettors, among those that are, wealth is still redistributed according to Bayes' rule.

\subsection{Market dynamics with fractional Kelly agents}

\begin{figure}
(a)\includegraphics[scale=0.9]{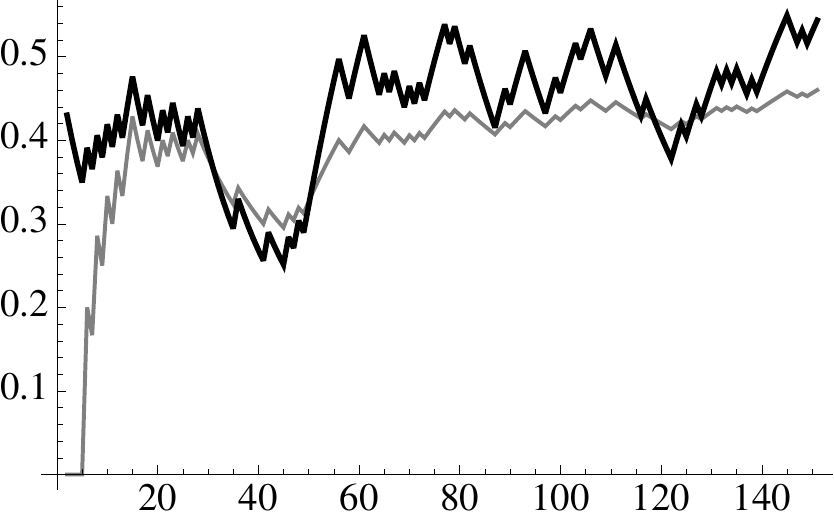}\\
(b) \includegraphics[scale=0.9]{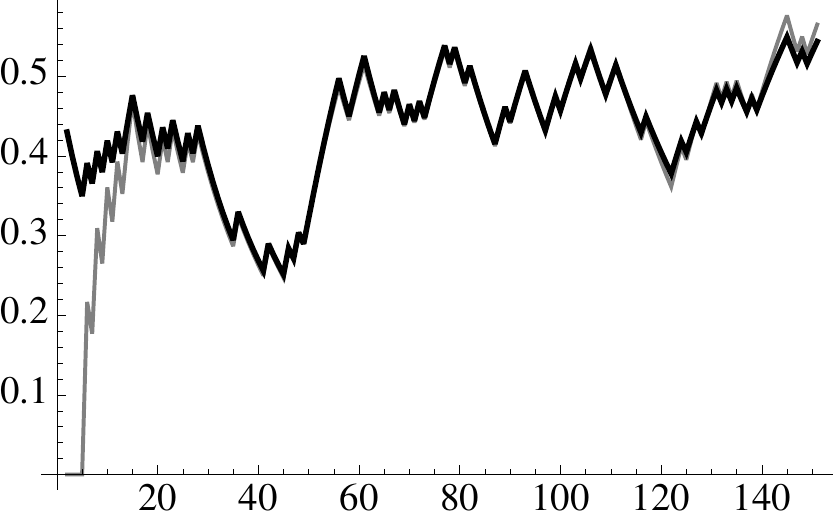}
% $s = 74$
\caption{(a) Price (black line) versus observed frequency
(gray line) over 150 time periods for 100 agents with Kelly fraction $\lambda=0.2$. As the frequency converges to
$\pi=0.5$, the price remains volatile. (b) Price (black line) versus discounted frequency (gray line),
with discount factor $\gamma=0.96$, for the same experiment as (a).}
\label{fig:adapt-price-glu-learn}
\end{figure}
In this section, we consider fractional Kelly agents who, as we saw in Section~\ref{sec:frac-kelly}, behave like full Kelly agents with belief $\lambda p + (1-\lambda)p_m$.
Figure~\ref{fig:adapt-price-glu-learn}.a graphs the dynamics of price in
an economy of 100 such agents, along with the observed frequency. Over
time, the price remains significantly more volatile than the frequency,
which converges toward $\pi=0.5$. Below, we characterize the transfer
of wealth that precipitates this added volatility; for now concentrate
on the price signal itself. Inspecting
Figure~\ref{fig:adapt-price-glu-learn}.a, price changes still exhibit a
marked dependence on event outcomes, though at any given period the
effect of recent history appears magnified, and the past discounted, as
compared with the observed frequency. Working from this intuition, we
attempt to fit the data to an appropriately modified measure of
frequency. Define the \emph{discounted frequency} at period $n$ as
\begin{equation}
d_n = \frac{\sum_{t=1}^n \gamma^{n-t} (1_{E(t)})}
           {\sum_{t=1}^n \gamma^{n-t} (1_{E(t)})  +
            \sum_{t=1}^n \gamma^{n-t} (1_{\overline{E(t)}}) },
\label{eq:discounted-frequency}
\end{equation}
where $1_{E(t)}$ is the indicator function for the event at period
$t$, and $\gamma$ is the \emph{discount factor}. Note that $\gamma=1$
recovers the standard observed frequency.

Figure~\ref{fig:adapt-price-glu-learn}.b illustrates a very close
correlation between discounted frequency, with $\gamma=0.96$ (hand
tuned), and the same price curve of
Figure~\ref{fig:adapt-price-glu-learn}.a. While standard frequency
provides a provably good model of price dynamics in an economy of
full-Kelly agents,
discounted frequency (\ref{eq:discounted-frequency}) appears a
better model for fractional Kelly agents.

%Prices are dependent on the apportionment of wealth among agents and
%also the agents' \emph{posterior} beliefs.

%Note that, in the current setting, the agents' beliefs themselves
%depend on price.
%
To explain the close fit to discounted frequency, one might expect that
wealth remains dispersed---as if the market's composite agent witnesses
fewer trials than actually occur.  That's true to an extent. Figure~\ref{fig:adapt-wealth-glu-learn-150} shows the distribution of wealth after 69 successes have occurred in 150 trials. Wealth is significantly more evenly distributed than a Beta distribution with parameters 69+1 and 81+1, also shown. However, the stretched distribution can't be modeled precisely as another, less-informed Beta distribution.
\begin{figure}
\includegraphics[scale=0.9]{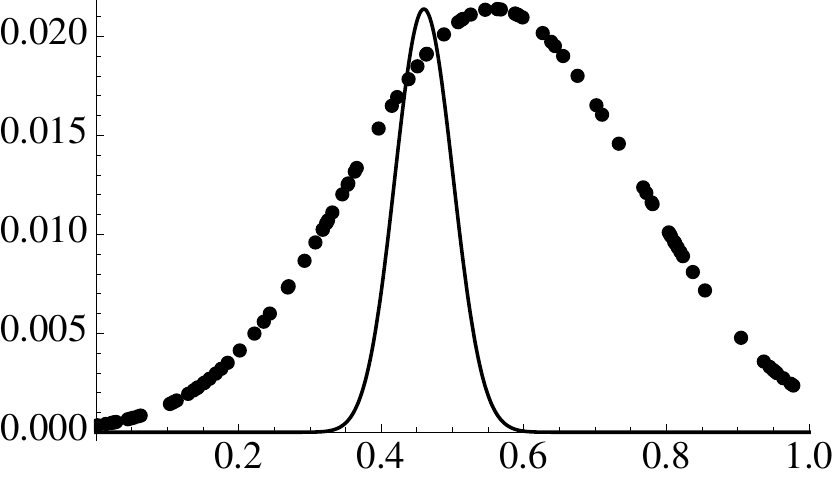}
\caption{(a) Wealth $w_i$ versus belief
$p_i$ at period 150 of the same experiment
as Figure~\ref{fig:adapt-price-glu-learn} with 100 agents with Kelly fraction $\lambda=0.2$. The observed frequency is
69/150
%(0.444)
and the solid line is $\mathtt{Beta}(69+1,81+1)$. The wealth distribution is significantly more evenly dispersed than the corresponding Beta distribution.}
\label{fig:adapt-wealth-glu-learn-150}
\end{figure}

\section{Learning the Kelly fraction}

In theory, a rational agent playing against rational opponents should set their Kelly fraction to $\lambda=0$, since, in a rational expectations equilibrium \cite{Grossman81}, the market price is by definition at least as informative as any agent's belief. This is the crux of the no-trade theorems \cite{Mil:82}. Despite the theory  \cite{Gea:82}, people do agree to disagree in practice and, simply put, trade happens. Still, placing substantial weight on the market price is often prudent. For example, in an online prediction contest called ProbabilitySports, 99.7\% of participants were outperformed by the unweighted average predictor, a typical result.\footnote{\tt\small http://www.overcomingbias.com/2007/02/how\_and\_when\_to.html}

In this light, fractional Kelly can be seen as an experts algorithm \cite{Cesa-Bianchi1997} with two experts: yourself and the market.
We propose dynamically updating $\lambda$ according to standard experts algorithm logic: When you're right, you increase $\lambda$ appropriately; when you're wrong, you decrease $\lambda$. This gives a long-term procedure for updating $\lambda$ that guarantees:
\begin{itemize}
\item You won't do too much worse than the market (which by definition earns 0)
\item You won't do too much worse than Kelly betting using your original prior $p$
\end{itemize}
For example, if you allocate an initial weight of $0.5$ to your
predictions and $0.5$ to the market's prediction, then the regret
guarantee of section~\ref{sec:sequence} implies that at most half of
all wealth is lost.

\section{Discussion}

We've shown something intuitively appealing here: self-interested
agents with log wealth utility create markets which learn to have
small regret according to log loss.  There are two distinct ``log''s
in this statement, and it's appealing to consider what happens when we
vary these.  When agents have some utility other than log wealth
utility, can we alter the structure of a market so that the market
dynamics make the market price have low log loss regret?  And
similarly if we care about some other loss---such as squared loss, 0/1
loss, or a quantile loss, can we craft a marketplace such that log
wealth utility agents achieve small regret with respect to these other
losses?

What happens in a market without Kelly bettors?  This can't be
described in general, although a couple special cases are relevant.
When all agents have constant absolute risk aversion, the market
computes a weighted geometric average of
beliefs~\cite{Pennock99-thesis,Pennock01-mfpo-tr,Rubinstein74}.  When
one of the bettors acts according to Kelly and the others in some more
irrational fashion.  In this case, the basic Kelly guarantee implies
that the Kelly bettor will come to dominate non-Kelly bettors with
equivalent or worse log loss.  If non-Kelly agents have a better log
loss, the behavior can vary, possibly imposing greater regret on the
marketplace if the Kelly bettor accrues the wealth despite a worse
prediction record.  For this reason, it may be desirable to make Kelly
betting an explicit option in prediction markets.

\bibliographystyle{abbrv}
\bibliography{kelly_betting} 

\end{document}